\documentclass{article}

%

\usepackage[final]{}
\usepackage[utf8]{inputenc} 
\usepackage[T1]{fontenc}    
\usepackage{hyperref}       
\usepackage{url}            
\usepackage{booktabs}       
\usepackage{amsfonts}       
\usepackage{nicefrac}       
\usepackage{microtype}      
\usepackage{xspace}

\usepackage{amsmath,amsthm,amsfonts,amssymb,dsfont,mathtools,bbm}
\usepackage{graphicx}
\usepackage{subcaption}
\usepackage{booktabs}
\usepackage{enumitem}
\usepackage{csquotes}
\usepackage{bm}
\usepackage{color}
\usepackage[dvipsnames]{xcolor}

\DeclareMathOperator*{\E}{\mathbb{E}}

\newtheorem{claim}{Claim}
\newtheorem*{claim*}{Claim}

\usepackage[colorinlistoftodos]{todonotes}

\def\imagetask{Image Classification\xspace}

\title{Controllable Invariance \\ through Adversarial Feature Learning}

\author{
  Qizhe Xie, Zihang Dai, Yulun Du, Eduard Hovy, Graham Neubig \\
  Language Technologies Institute\\
  Carnegie Mellon University\\
  \texttt{\{qizhex, dzihang, yulund, hovy, gneubig\}@cs.cmu.edu} \\
}
\begin{document}

\maketitle

\newcommand{\qizhe}[1]{\textbf{\textcolor{red}{Qizhe: #1}}}
\newcommand{\zihang}[1]{\textbf{\textcolor{blue}{Zihang: #1}}}
\newcommand{\yulun}[1]{\textbf{\textcolor{green}{Yulun: #1}}}
\newcommand{\gn}[1]{\textbf{\textcolor{orange}{GN: #1}}}

\begin{abstract}
Learning meaningful representations that maintain the content necessary for a particular task while filtering away detrimental variations is a problem of great interest in machine learning. In this paper, we tackle the problem of learning representations invariant to a \textit{specific factor or trait} of data. The representation learning process is formulated as an adversarial minimax game. We analyze the optimal equilibrium of such a game and find that it amounts to maximizing the uncertainty of inferring the detrimental factor given the representation while maximizing the certainty of making task-specific predictions. On three benchmark tasks, namely fair and bias-free classification, language-independent generation, and lighting-independent image classification, we show that the proposed framework induces an invariant representation, and leads to better generalization evidenced by the improved performance.
\end{abstract}
\section{Introduction}
\label{sec:intro}
How to produce a data representation that maintains meaningful variations of data while eliminating noisy signals is a consistent theme of machine learning research.
In the last few years, the dominant paradigm for finding such a representation has shifted from manual feature engineering based on specific domain knowledge to representation learning that is fully data-driven, and often powered by deep neural networks~\citep{bengio2013representation}.
Being universal function approximators~\citep{gybenko1989approximation}, deep neural networks can easily uncover the complicated variations in data~\citep{zhang2016understanding}, leading to powerful representations.
However, how to systematically incorporate a desired invariance into the learned representation in a controllable way remains an open problem.

A possible avenue towards the solution is to devise a dedicated neural architecture that by construction has the desired invariance property. 
As a typical example, the parameter sharing scheme and pooling mechanism in modern deep convolutional neural networks (CNN)~\citep{lecun1998gradient}
take advantage of the spatial structure of image processing problems, allowing them to induce more generic feature representations than fully connected networks.
Since the invariance we care about can vary greatly across tasks, this approach requires us to design a new architecture each time a new invariance desideratum shows up, which is time-consuming and inflexible. 

When our belief of invariance is specific to some attribute of the input data, an alternative approach is to build a probabilistic model with a random variable corresponding to the attribute, and explicitly reason about the invariance.
For instance, the variational fair auto-encoder (VFAE)~\citep{louizos2015variational} employs the maximum mean discrepancy (MMD) to eliminate the negative influence of specific ``nuisance variables'', such as removing the lighting conditions of images to predict the person's identity. 
Similarly, under the setting of domain adaptation, standard binary adversarial cost~\citep{ganin2014unsupervised, ganin2016domain} and central moment discrepancy (CMD)~\citep{zellinger2017central} have been 
utilized to learn features that are domain invariant.
However, all these invariance inducing criteria suffer from a similar drawback, which is they are defined to measure the divergence between a \textit{pair} of distributions. 
Consequently, they can only express the invariance belief w.r.t. a pair of values of the random variable at a time.
When the attribute is a multinomial variable that takes more than two values, combinatorial number of pairs (specifically, $O(n^2)$) have to be added to express the belief that the representation should be invariant to the attribute.
The problem is even more dramatic when the attribute represents a structure that has exponentially many possible values (e.g. the parse tree of a sentence) or when the attribute is simply a continuous variable.

Motivated by the aforementioned drawbacks and difficulties, in this work, we consider the problem of learning a feature representation with the desired invariance.
We aim at creating a unified framework that is (1) generic enough such that it can be easily plugged into different models, and (2) more flexible to express an invariance belief in quantities beyond discrete variables with limited value choices.
Specifically, inspired by the recent advancement of adversarial learning~\citep{goodfellow2014generative}, we formulate the representation learning as a minimax game among three players: an \textit{encoder} which maps the observed data deterministically into a feature space, a \textit{discriminator} which looks at the representation and tries to identify a specific type of variation we hope to eliminate from the feature, and a \textit{predictor} which makes use of the invariant representation to make predictions as in typical discriminative models.
We provide theoretical analysis of the equilibrium condition of the minimax game, and give an intuitive interpretation. 
On three benchmark tasks from different domains, we show that the proposed approach not only improves upon vanilla discriminative approaches that do not encourage invariance, but also outperforms existing approaches that enforce invariant features.

\section{Adversarial Invariant Feature Learning}
\label{sec:framework}
In this section, we formulate our problem and then present the proposed framework of learning invariant features.

\begin{figure}[ht]
\centering
  \begin{subfigure}{.4\textwidth}
    \centering
	\includegraphics[width=0.45\linewidth]{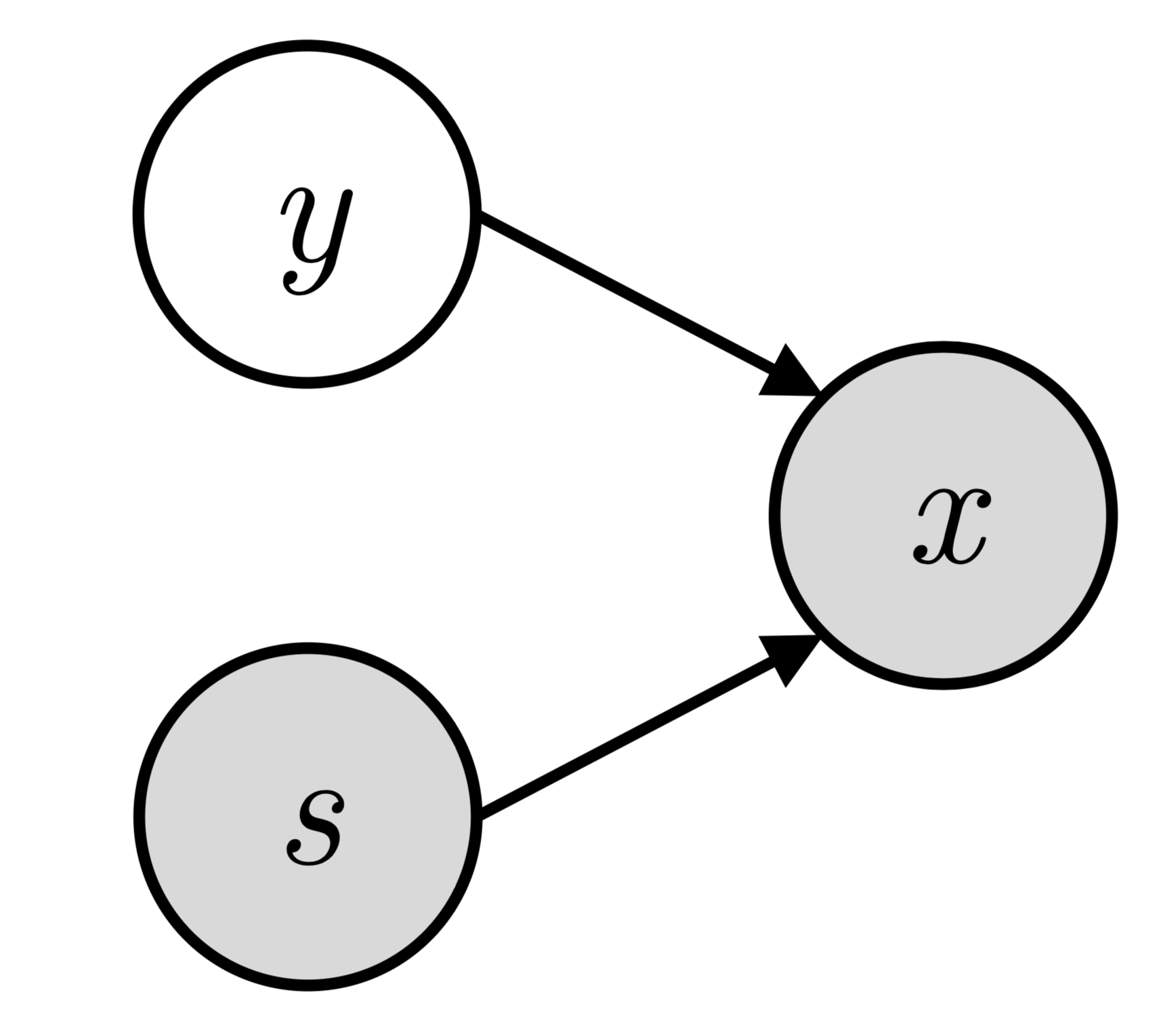}
    \caption{$y$ and $s$ are marginally independent}
    \label{fig:gen_ind}
  \end{subfigure} %
  \begin{subfigure}{.4\textwidth}
    \centering
        \includegraphics[width=0.675\linewidth]{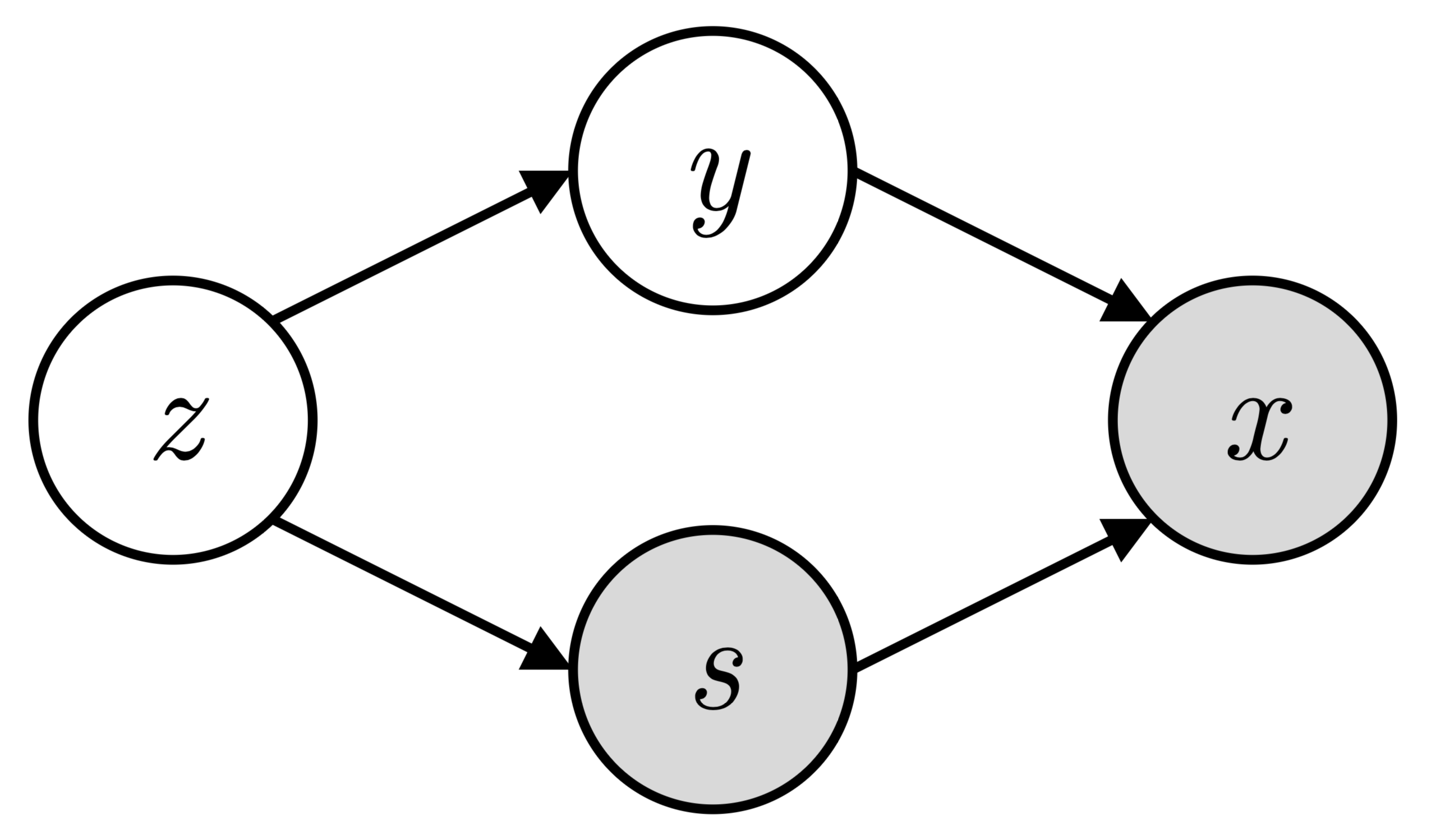}
         \caption{$y$ and $s$ are not marginally independent}
          \label{fig:gen_dep}
  \end{subfigure} 
  \caption{Dependencies between $x, s, y$, where $x$ is the observation and $y$ is the target to be predicted. $s$ is the attribute to which the prediction should be invariant. }
    \label{fig:gen}
\end{figure}

Given observation/input $x$, we are interested in the task of predicting the target $y$ based on the value of $x$ using a discriminative approach. In addition, we have access to some intrinsic attribute $s$ of $x$ as well as a prior belief that the prediction result should be invariant to $s$. 

There are two possible dependency scenarios of $x, s$ and $y$ here: 
(1) $s$ and $y$ can be marginally independent. For example, in image classifications, lighting conditions $s$ and identities of persons $y$ are independent. The data generation process is $s \sim p(s), y \sim p(y), x \sim p(x \mid s, y)$. 
(2) In some cases, $s$ and $y$ are not marginally independent. For example, in fairness classifications, $s$ are the sensitive factors such as age and gender. $y$ can be the saving, credit and health condition of a person. $s$ and $y$ are related due to the inherent bias within the data. Using a latent variable $z$ to model the dependency between $s$ and $y$, the data generation process is $z \sim p(z), s \sim p(s \mid z), y \sim p(y\mid z), x \sim p(x \mid s, y)$. We show the corresponding dependency graphs in Figure \ref{fig:gen}.

Unlike vanilla discriminative models that outputs the conditional distribution $p(y \mid x)$, we model $p(y \mid x, s)$ to make predictions invariant to $s$. Our intuition is that, due to the explaining away effect, $y$ and $s$ are not independent when conditioned on $x$  although they can be marginally independent. Consequently, $p(y \mid x, s)$ is a more accurate estimation of $y$ than $p(y \mid x)$. Intuitively, this can inform and guide the model to remove information about undesired variations. 
For example, if we want to learn a representation of image $x$ that is invariant to the lighting condition $s$, the model can learn to ``brighten'' the input if it knows the original picture is dark, and vice versa. Also, in multi-lingual machine translation, a word with the same surface form may have different meanings in different languages. For instance, ``gift'' means ``present'' in English but means ``poison'' in German. Hence knowing the language of a source sentence helps inferring the meaning of the sentence and conducting translation.

As the input $x$ can have highly complicated structure, we employ a dedicated model or algorithm to extract an expressive representation $h$ from $x$.
Thus, when we extract the representation $h$ from $x$, we want the representation $h$ to preserve variations that are necessary to predict $y$ while eliminating information of $s$. 
To achieve the aforementioned goal, we employ a deterministic encoder $E$ to obtain the representation by encoding $x$ and $s$ into $h$, namely, $h = E(x, s)$. It should be noted here that we are using $s$ as an additional input.
Given the obtained representation $h$, the target $y$ is predicted by a predictor $M$, which effectively models the distribution $q_M(y \mid h)$. 
By construction, instead of modeling $p(y \mid x)$ directly, the discriminative model we formulate captures the conditional distribution $p(y \mid x, s)$ with additional information coming from $s$. 

Surely, feeding $s$ into the encoder by no means guarantees the induced feature $h$ will be invariant to $s$.
Thus, in order to enforce the desired invariance and eliminate variations of factor $s$ from $h$, we set up an adversarial game by introducing a discriminator $D$ which inspects the representation $h$ and ensure that it is invariant to $s$. 
Concretely, the discriminator $D$ is trained to predict $s$ based on the encoded representation $h$, which effectively maximizes the likelihood $q_D(s \mid h)$. 
Simultaneously, the encoder fights to minimize the same likelihood of inferring the correct $s$ by the discriminator. 
Intuitively, the discriminator and the encoder form an adversarial game where the discriminator tries to detect an attribute of the data while the encoder learns to conceal it.

Note that under our framework, in theory, $s$ can be any type of data as long as it represents an attribute of $x$.
For example, $s$ can be a real value scalar/vector, which may take many possible values, or a complex sub-structure such as the parse tree of a natural language sentence.
But in this paper, we focus mainly on instances where $s$ is a discrete label with multiple choices.
We plan to extend our framework to deal with continuous $s$ and structured $s$ in the future.

Formally, $E$, $M$ and $D$ jointly play the following minimax game:
\begin{equation*}
\min_{E, M} \max_{D} J(E, M, D)  
\label{eqn:objective}
\end{equation*}
where
\begin{equation}
J(E, M, D) = 
\E_{x, s, y \sim p(x, s, y)} 
\left[ 
\gamma \log q_D(s \mid h=E(x, s))
-
 \log q_M(y \mid h=E(x, s))
\right]
\label{eq:expected_objective_before}
\end{equation}
where $\gamma$ is a hyper-parameter to adjust the strength of the invariant constraint, and $p(x, s, y)$ is the true underlying distribution that the empirical observations are drawn from.

Note that the problem of domain adaption can be seen as a special case of our problem, where $s$ is a Bernoulli variable representing the domain and the model only has access to the target $y$ when $s=$ ``source domain'' during training.

\section{Theoretical Analysis}
In this section, we theoretically analyze, given enough capacity and training time, whether such a minimax game will converge to an equilibrium where variations of $y$ are preserved and variations of $s$ are removed.
The theoretical analysis is done in a non-parametric limit, i.e., we assume a model with infinite capacity. In addition, we discuss the equilibriums of the minimax game when $s$ is independent/dependent to $y$.

Since both the discriminator and the predictor only use $h$ which is transformed deterministically from $x$ and $s$, we can substitute $x$ with $h$ and define a joint distribution $\tilde{p}(h, s, y)$ of $h, s$ and $y$ as follows
\begin{equation*}
\begin{aligned}
\tilde{p}(h, s, y) 
= \int_x \tilde{p}(x, s, h, y) dx = \int_x p(x, s, y) p_E(h \mid x, s) dx = \int_x p(x, s, y) \delta(E(x,s)=h) dx
\end{aligned}
\end{equation*}
Here, we have used the fact that the encoder is a deterministic transformation and thus the distribution $p_E(h \mid x, s)$ is merely a delta function denoted by $\delta(\cdot)$. 
Intuitively, $h$ absorbs the randomness in $x$ and has an implicit distribution of its own. 
Also, note that the joint distribution $\tilde{p}(h, s, y)$ depends on the transformation defined by the encoder.

Thus, we can equivalently rewrite objective \eqref{eq:expected_objective_before} as
\begin{equation}
J(E, M, D) = 
\E_{h, s, y \sim \tilde{p}(h, s, y)} 
\left[ 
\gamma \log q_D(s \mid h)
-
 \log q_M(y \mid h)
\right]
\label{eq:expected_objective_after}
\end{equation}
To analyze the equilibrium condition of the new objective \eqref{eq:expected_objective_after}, we first deduce the optimal discriminator $D$ and the optimal predictor $M$ for a given encoder $E$ and then prove the global optimality of the minimax game. 

\begin{claim}
Given a fixed encoder $E$, the optimal discriminator outputs $q_D^*(s \mid h)= \tilde{p}(s \mid h)$ and the optimal predictor corresponds to $q_M^*(y \mid h) = \tilde{p}(y \mid h)$.
\label{prop:optimal_D}
\end{claim}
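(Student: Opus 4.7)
The plan is to exploit the fact that once $E$ is fixed, the induced joint $\tilde{p}(h,s,y)$ is fixed, and the objective in \eqref{eq:expected_objective_after} decouples cleanly into a $D$-only term and an $M$-only term. So the two optimizations can be handled independently, and each reduces to the standard observation that cross-entropy against a true distribution is minimized by that true distribution (Gibbs' inequality / non-negativity of KL).

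First I would rewrite the expectation over $\tilde{p}(h,s,y)$ by marginalizing out the irrelevant variable in each term, giving
\begin{equation*}
J(E,M,D) \;=\; \gamma\,\E_{(h,s)\sim\tilde{p}(h,s)}\!\left[\log q_D(s\mid h)\right] \;-\; \E_{(h,y)\sim\tilde{p}(h,y)}\!\left[\log q_M(y\mid h)\right].
\end{equation*}
Holding $E$ fixed, the first term depends only on $D$ and the second only on $M$, so $\max_D$ and (the $M$ part of) $\min_{E,M}$ can be solved pointwise.

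Next I would apply the standard decomposition to each term. Conditioning on $h$ and writing the inner expectation in terms of entropy and KL divergence,
\begin{equation*}
\E_{s\sim\tilde{p}(s\mid h)}[\log q_D(s\mid h)] \;=\; -H\!\left(\tilde{p}(\cdot\mid h)\right) \;-\; \KL\!\left(\tilde{p}(\cdot\mid h)\,\|\,q_D(\cdot\mid h)\right),
\end{equation*}
so for each $h$ in the support of $\tilde{p}(h)$ this is maximized (uniquely, up to $\tilde{p}$-null sets) by $q_D(\cdot\mid h)=\tilde{p}(s\mid h)$. The same argument, applied to the $-\log q_M(y\mid h)$ term that the encoder--predictor pair is trying to minimize, yields $q_M^*(y\mid h)=\tilde{p}(y\mid h)$. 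Taking expectations over $h\sim\tilde{p}(h)$ then gives the claim globally.

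I do not expect a real obstacle here; this is essentially Gibbs' inequality twice. The only points that need a brief sanity check are (i) that $\gamma>0$ so maximizing $\gamma\log q_D$ and $\log q_D$ coincide, (ii) that we are working in the non-parametric limit so $q_D$ and $q_M$ really can be chosen to equal any conditional distribution, and (iii) that equality holds only $\tilde{p}(h)$-almost everywhere, which is the standard caveat for such optimality claims and does not affect the value of $J$.
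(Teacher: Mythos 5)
Your proof is correct, but it takes a different route from the paper's. The paper sets this up as a constrained optimization over $q_D$ with the normalization constraint $\sum_s q_D(s\mid h)=1$, forms the Lagrangian, takes the functional derivative to find the stationary point, solves for the multiplier $\lambda(h)=-\tilde{q}(h)$ by summing over $s$, and then invokes functional convexity of the objective to conclude that the stationary point is the optimum; the predictor case is handled by the symmetric calculation. You instead decouple the objective into a $D$-only and an $M$-only term and apply the entropy-plus-KL decomposition of the cross-entropy, so that optimality follows pointwise in $h$ from $\KL(\tilde{p}(\cdot\mid h)\,\|\,q(\cdot\mid h))\ge 0$. Your argument buys a direct global-optimality statement (no separate appeal to convexity of the stationary point) together with uniqueness up to $\tilde{p}(h)$-null sets, and your explicit sanity checks on the sign of $\gamma$ and the non-parametric assumption are points the paper leaves implicit. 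The Lagrangian route is more mechanical but is the standard template when the inner optimum does not reduce to a recognizable cross-entropy argmax. Both arrive at the same conclusion and both are valid; there is no gap in your proposal.
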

\begin{proof}
The proof uses the fact that the objective is functionally convex w.r.t. each distribution, and by taking the variations we can obtain the stationary point for $q_D$ and $q_M$ as a function of $\tilde{q}$. 
The detailed proof is included in the supplementary material \ref{appendix:proof}.
\end{proof}

Note that the optimal $q_D^*(s \mid h)$ and $q_M^*(y \mid h)$ given in Claim \ref{prop:optimal_D} are both functions of the encoder $E$. 
Thus, by plugging $q_D^*$ and $q_M^*$ into the original minimax objective \eqref{eq:expected_objective_after}, it can be simplified as a minimization problem only w.r.t. the encoder $E$ with the following form:
\begin{equation}
\begin{aligned}
\min_{E} J(E) 
&= \min_{E} \E_{h, s, y \sim \tilde{q}(h, s, y)} \left[ \gamma \log \tilde{q}(s \mid h) - \log \tilde{q}(y \mid h) \right] \\
&= \min_{E} -\gamma H(\tilde{q}(s \mid h)) +  H(\tilde{q}(y \mid h))
\end{aligned}
\label{eq:objective_final}
\end{equation}
where $H(\tilde{q}(s \mid h))$ is the conditional entropy of the distribution $\tilde{q}(s \mid h)$.

\paragraph{Equilibrium Analysis}
As we can see, the objective \eqref{eq:objective_final} consists of two conditional entropies with different signs.
Optimizing the first term amounts to maximizing the uncertainty of inferring $s$ based on $h$, which is essentially filtering out any information of $s$ from the representation. 
On the contrary, optimizing the second term leads to increasing the certainty of predicting $y$ based on $h$.
Implicitly, the objective defines the equilibrium of the minimax game.
\begin{itemize}[leftmargin=*]
\item \textbf{Win-win equilibrium:} Firstly, for cases where the attribute $s$ is entirely irrelevant to the prediction task (corresponding to the dependency graph shown in Figure \ref{fig:gen_ind}), the two terms can reach the optimum at the same time, leading to a win-win equilibrium. For example, with the lighting condition of an image removed, we can still/better classify the identity of the people in that image. 
With enough model capacity, the optimal equilibrium solution would be the same regardless of the value of $\gamma$. 
\item \textbf{Competing equilibrium:} However, there are cases where these two optimization objectives are competing. 
For example, in fair classifications, sensitive factors such as gender and age may help the overall prediction accuracies due to inherent biases within the data. In other words, knowing $s$ may help in predicting $y$ since $s$ and $y$ are not marginally independent (corresponding to the dependency graph shown in Figure \ref{fig:gen_dep}). Learning a fair/invariant representation is harmful to predictions.  
In this case, the optimality of these two entropies cannot be achieved simultaneously, and $\gamma$ defines the relative strengths of the two objectives in the final equilibrium.
\end{itemize}
\section{Parametric Instantiation of the Proposed Framework}
\subsection{Models} 

To show the general applicability of our framework, we experiment on three different tasks including sentence generation, image classification and fair classifications.
Due to the different natures of data of $x$ and $y$, here we present the specific model instantiations we use.

\paragraph{Sentence Generation}
We use multi-lingual machine translation as the testbed for sentence generation. Concretely, we have translation pairs between several source languages and a target language. $x$ is the source sentence to be translated and $s$ is a scalar denoting which source language $x$ belongs to. $y$ is the translated sentence for the target language. 

Recall that $s$ is used as an input of $E$ to obtain a language-invariant representation.
To make full use of $s$, we employ separate encoders $\mathrm{Enc}_s$ for sentences in each language $s$. In other words, $h=E(s, x)=\mathrm{Enc}_s(x)$ where each $\mathrm{Enc}_s$ is a different encoder. The representation of a sentence is captured by the hidden states of an LSTM encoder~\citep{hochreiter1997long} at each time step. 

We employ a single LSTM predictor for different encoders.
As often used in language generation, the probability $q_M$ output by the predictor is parametrized by an autoregressive process, i.e.,
\begin{equation*}
q_M(y_{1:T} \mid h) = \prod_{t=1}^T q_M(y_t|y_{<t}, h)
\label{eq:predictor_auto_reg}
\end{equation*}

where we use an LSTM with attention model~\citep{bahdanau2014neural} to compute $q_M(y_t|y_{<t}, h)$. 

The discriminator is also parameterized as an LSTM which gives it enough capacity to deal with input of multiple timesteps. $q_D(s \mid h)$ is instantiated with the multinomial distribution computed by a softmax layer on the last hidden state of the discriminator LSTM. 

\paragraph{Classification}
For our classification experiments, the input is either a picture or a feature vector. All of the three players in the minimax game are constructed by feedforward neural networks. We feed $s$ to the encoder as an embedding vector.

\subsection{Optimization} 
There are two possible approaches to optimize our framework in an adversarial setting. The first one is similar to the alternating approach used in Generative Adversarial Nets (GANs)~\citep{goodfellow2014generative}. We can alternately train the two adversarial components while freezing the third one. This approach has more control in balancing the encoder and the discriminator, which effectively avoids saturation. 
Another method is to train all three components together with a gradient reversal layer ~\citep{ganin2014unsupervised}. In particular, the encoder admits gradients from both the discriminator and the predictor, with the gradient from the discriminator negated to push the encoder in the opposite direction desired by the discriminator. ~\citet{chen2016adversarial} found the second approach easier to optimize since the discriminator and the encoder are fully in sync being optimized altogether. Hence we adopt the latter approach.
In all of our experiments, we use Adam~\citep{kingma2014adam} with a learning rate of $0.001$.

\section{Experiments}
In this section, we perform empirical experiments to evaluate the effectiveness of proposed framework.
We first introduce the tasks and corresponding datasets we consider.
Then, we present the quantitative results showing the superior performance of our proposed framework, and discuss some qualitative analysis which verifies the learned representations have the desired invariance property.
\subsection{Datasets}
Our experiments include three tasks in different domains: (1) fair classification, in which predictions should be unaffected by nuisance factors; (2) language-independent generation which is conducted on the multi-lingual machine translation problem; (3) lighting-independent image classification.

\paragraph{Fair Classification} For fair classification, we use three datasets to predict the savings, credit ratings and health conditions of individuals with variables such as gender or age specified as ``nuisance variable'' that we would like to not consider in our decisions \citep{zemel2013learning,louizos2015variational}. The German dataset~\citep{frank2010uci} is a small dataset with $1,000$ samples describing whether a person has a good credit rating. The sensitive nuisance variable to be factored out is gender. The Adult income dataset~\citep{frank2010uci} has $45,222$ data points and the objective is to predict whether a person has savings of over $50,000$ dollars with the sensitive factor being age. The task of the health dataset\footnote{www.heritagehealthprize.com} is to predict whether a person will spend any days in the hospital in the following year. The sensitive variable is also the age and the dataset contains $147,473$ entries. We follow the same $5$-fold train/validation/test splits and feature preprocessing used in \citep{zemel2013learning,louizos2015variational}.

Both the encoder and the predictor are parameterized by single-layer neural networks. A three-layer neural network with batch normalization~\citep{ioffe2015batch} is employed for the discriminator. We use a batch size of $16$ and the number of hidden units is set to $64$. $\gamma$ is set to $1$ in our experiments. 

\paragraph{Multi-lingual Machine Translation} For the multi-lingual machine translation task we use French to English (fr-en) and German to English (de-en) pairs from IWSLT 2015 dataset~\citep{cettolo2012wit3}. There are $198,435$ pairs of fr-en sentences and $188,661$ pairs of de-en sentences in the training set. In the test set, there are $4,632$ pairs of fr-en sentences and $7,054$ pairs of de-en sentences. We evaluate BLEU scores~\citep{papineni2002bleu} using the standard Moses \texttt{multi-bleu.perl} script. Here, $s$ indicates the language of the source sentence. 

We use the OpenNMT~\citep{2017opennmt} in our multi-lingual MT experiments\footnote{Our MT code is available at https://github.com/qizhex/Controllable-Invariance}. The encoder is a two-layer bidirectional LSTM with 256 units for each direction. The discriminator is a one-layer single-directional LSTM with 256 units. The predictor is a two-layer LSTM with 512 units and attention mechanism~\citep{bahdanau2014neural}. We follow~\citet{johnson2016google}  and use Byte Pair Encoding (BPE) subword units~\citep{sennrich2015neural} as the cross-lingual input. Every model is run for $20$ epochs. $\gamma$ is set to $8$ and the batch size is set to $64$.

\paragraph{\imagetask} We use the Extended Yale B dataset~\citep{georghiades2001few} for our image classification task. It comprises face images of $38$ people under $5$ different lighting conditions: upper right, lower right, lower left, upper left, or the front. The variable $s$ to be purged is the lighting condition. The label $y$ is the identity of the person. We follow ~\citet{li2014learning,louizos2015variational}'s train/test split and no validation is used: $38 \times 5=190$ samples are used for training and all other $1,096$ data points are used for testing. 

We use a one-layer neural network for the encoder and a one-layer neural network for prediction. $\gamma$ is set to $2$. The discriminator is a two-layer neural network with batch normalization. The batch size is set to $16$ and the hidden size is set to $100$. 

\begin{figure}[ht]
   \centering
   \begin{subfigure}{\linewidth}
  \begin{subfigure}{.32\textwidth}
    \centering
        \includegraphics[width=0.95\linewidth]{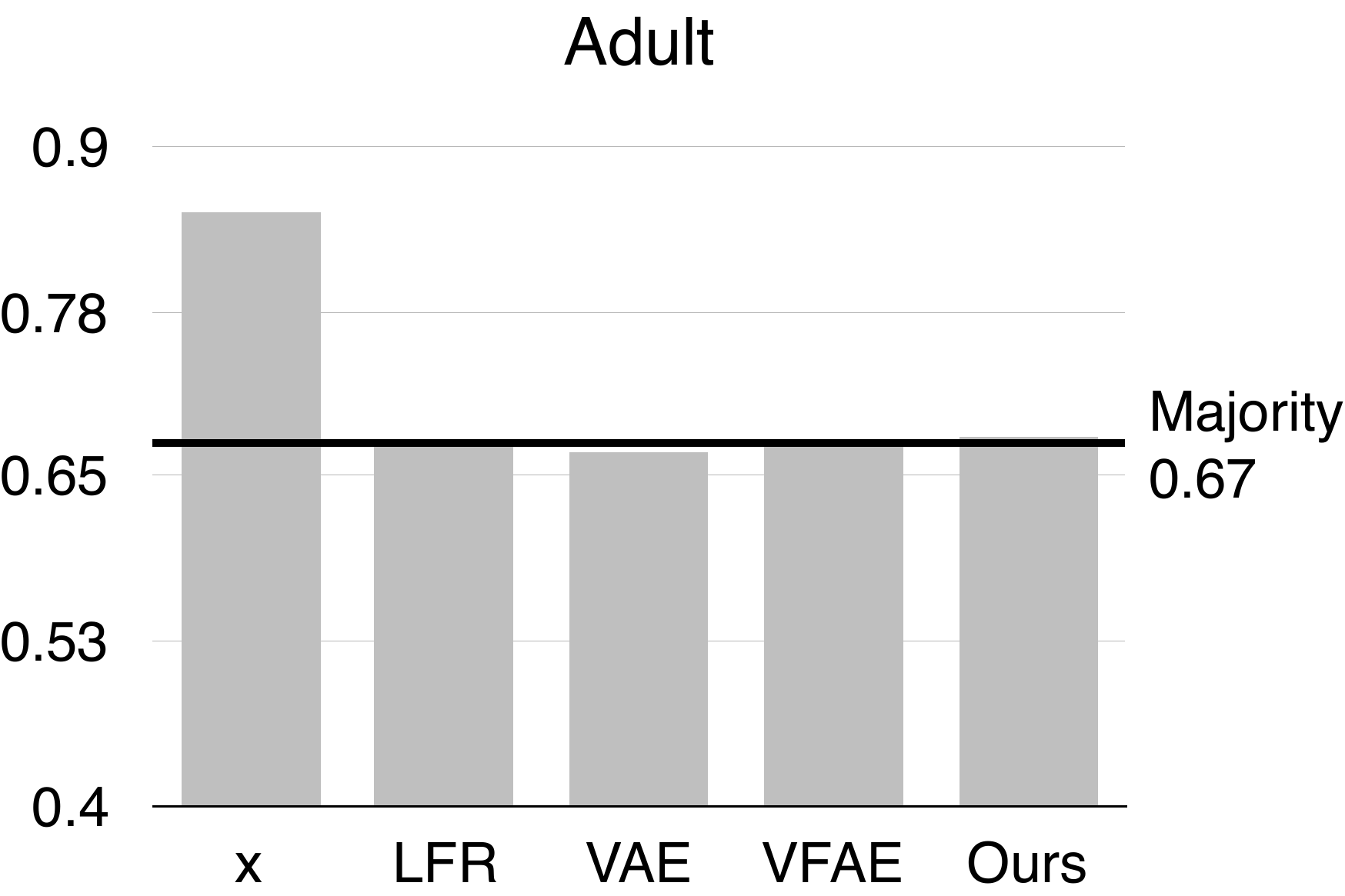}
  \end{subfigure} %
  \begin{subfigure}{.32\textwidth}
    \centering
        \includegraphics[width=0.95\linewidth]{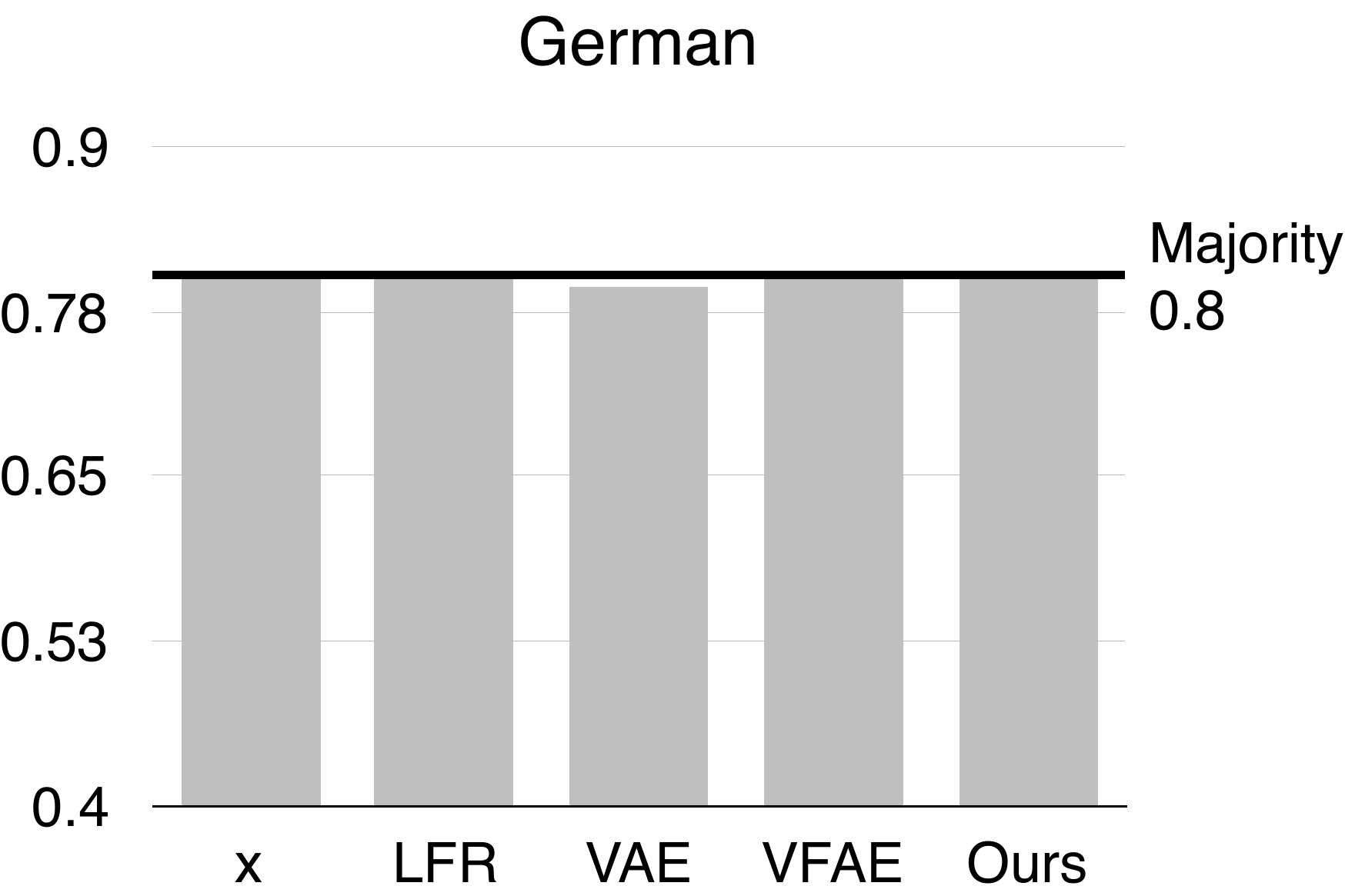}
  \end{subfigure} %
  \begin{subfigure}{.32\textwidth}
    \centering
        \includegraphics[width=0.95\linewidth]{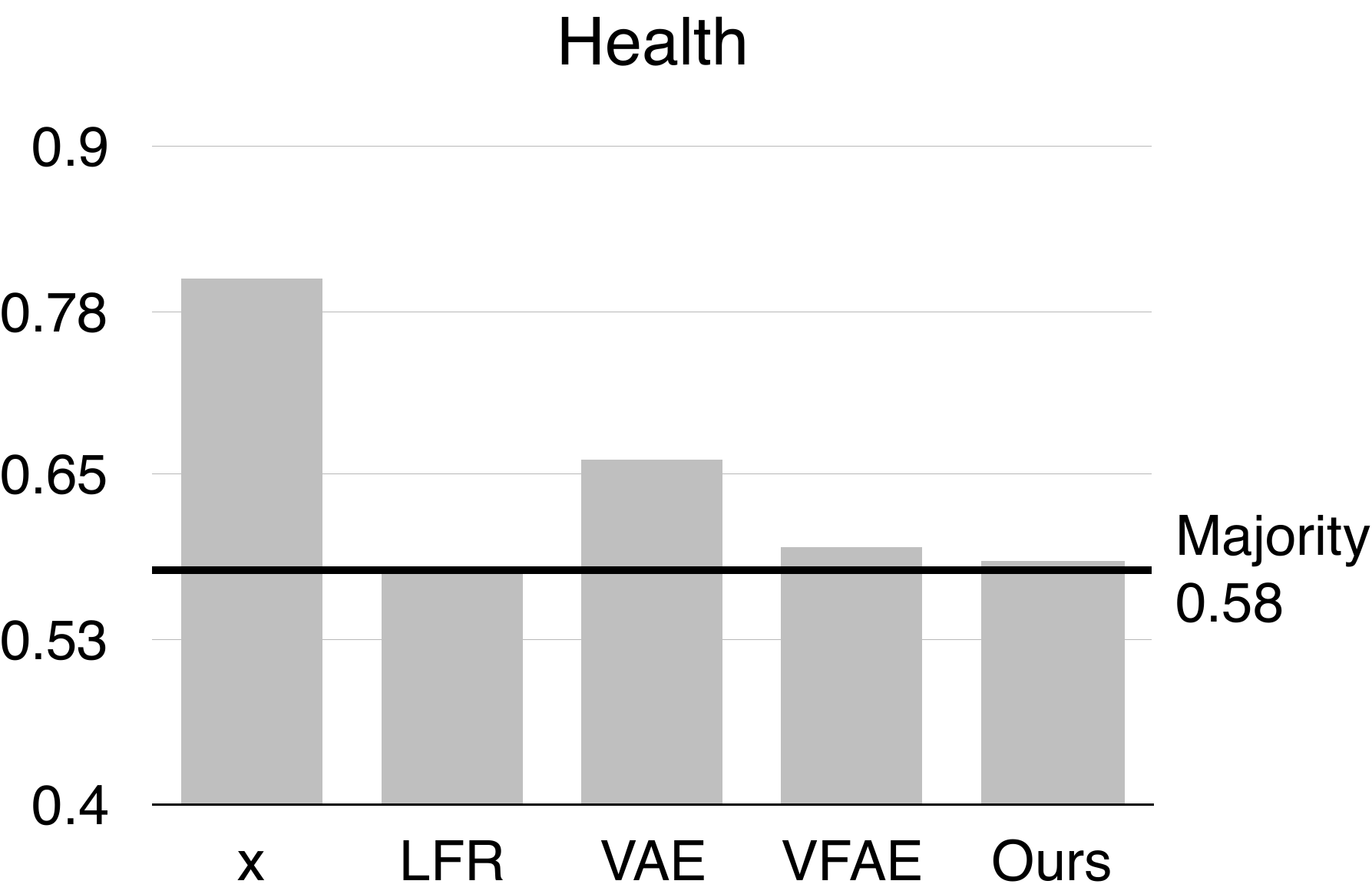}
  \end{subfigure}
  \caption{Accuracy on predicting $s$. The closer the result is to the majority line, the better the model is in eliminating the effect of nuisance variables.}
  \end{subfigure} \\
  \begin{subfigure}{\linewidth}
  \begin{subfigure}{.32\textwidth}
    \centering
        \includegraphics[width=0.95\linewidth]{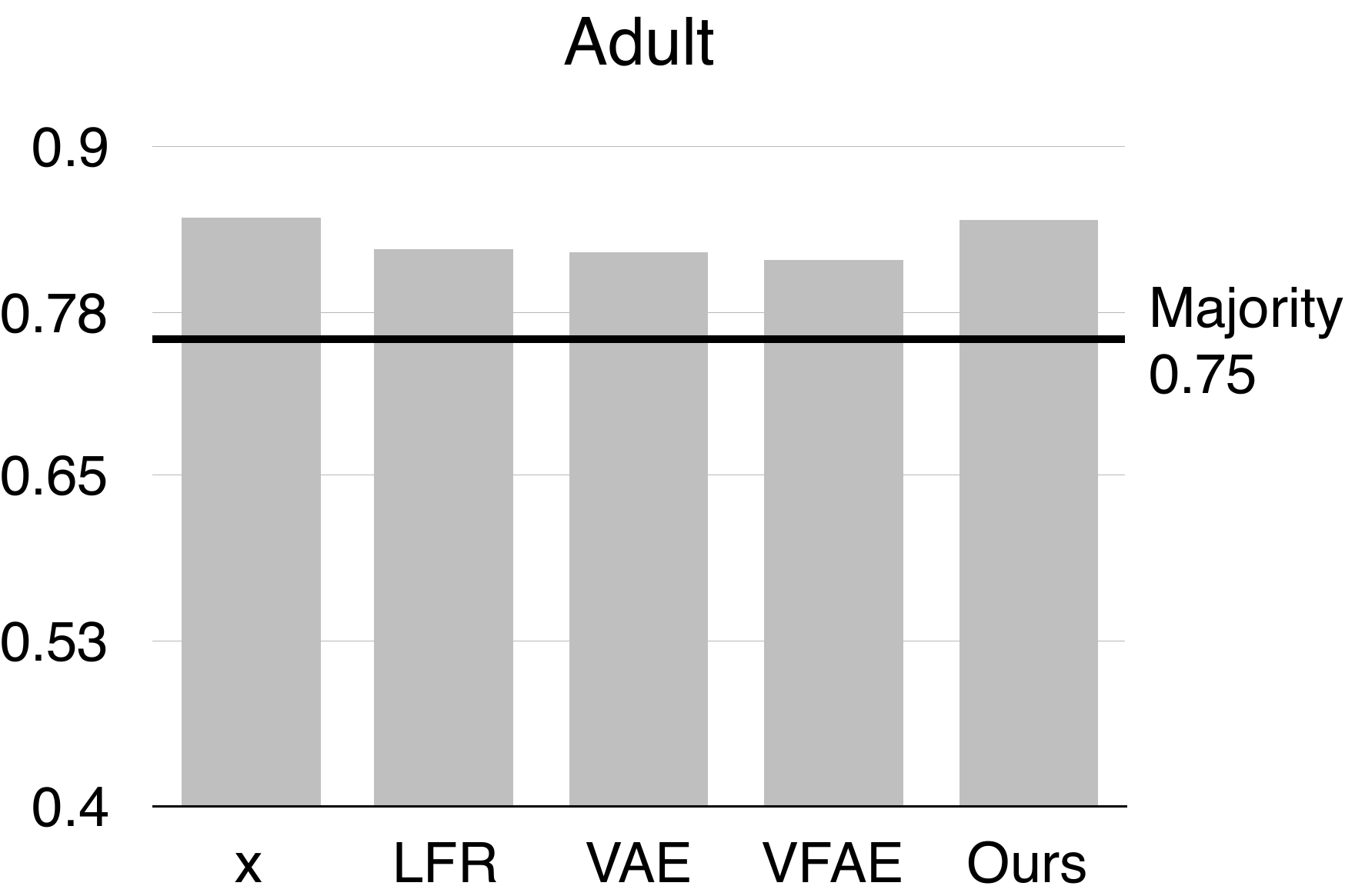}
  \end{subfigure}%
    \begin{subfigure}{.32\textwidth}
    \centering
        \includegraphics[width=0.95\linewidth]{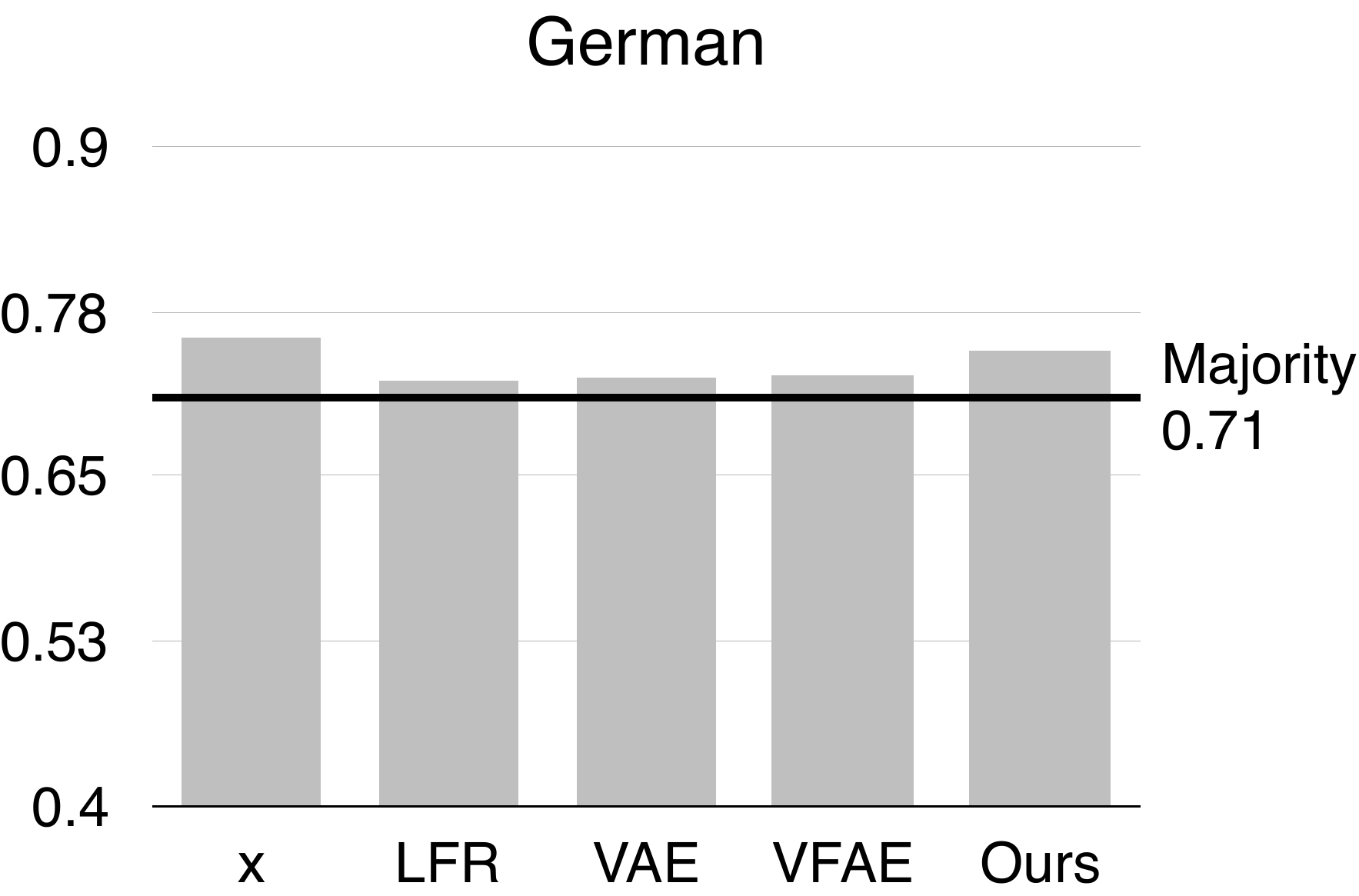}
  \end{subfigure}%
  \begin{subfigure}{.32\textwidth}
    \centering
        \includegraphics[width=0.95\linewidth]{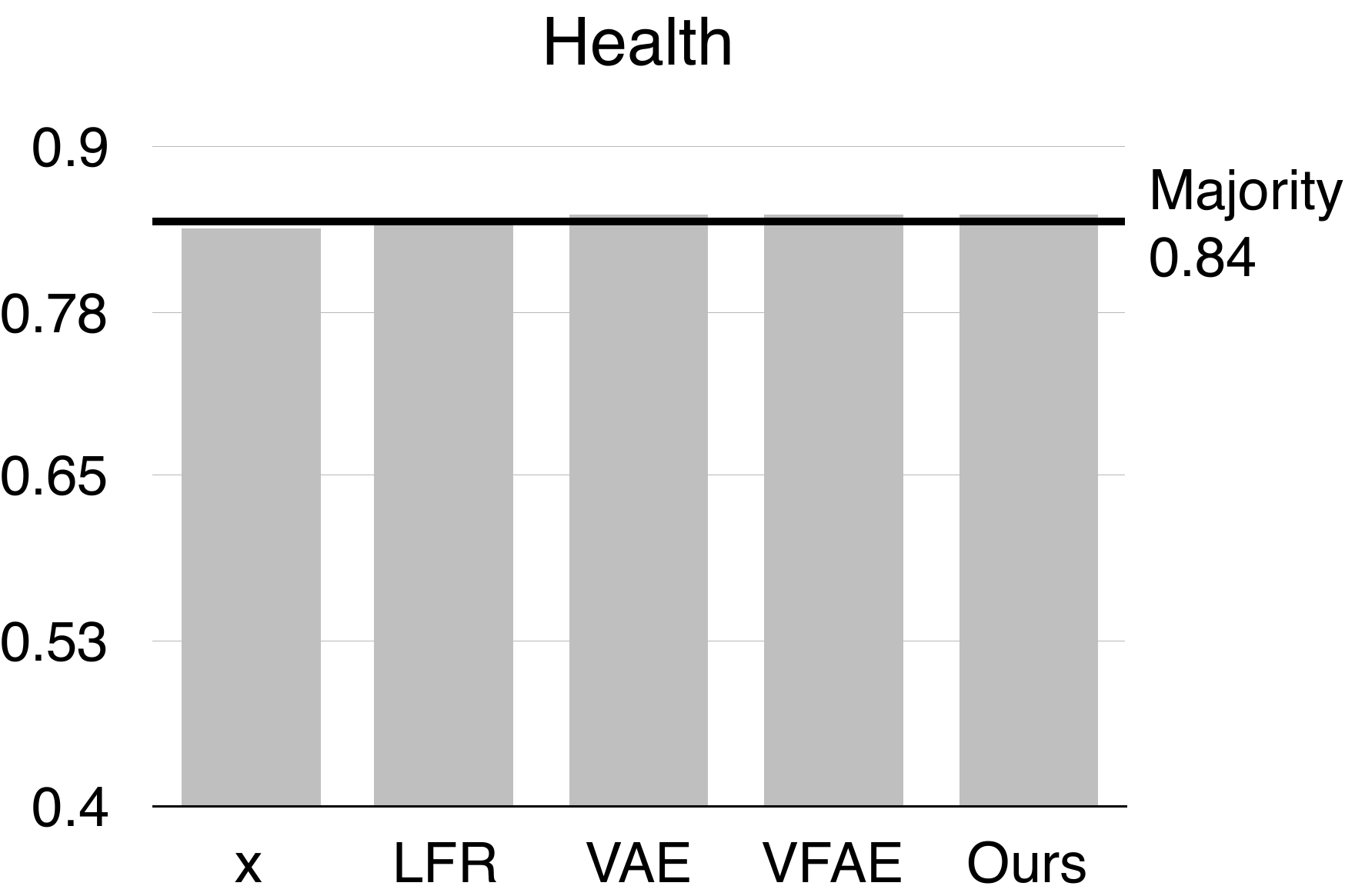}
  \end{subfigure}
  \caption{Accuracy on predicting $y$. High accuracy in predicting $y$ is desireable.}
  \end{subfigure}\\
  \begin{subfigure}{\linewidth}
  \begin{subfigure}{.32\textwidth}
      \centering
      \includegraphics[width=0.95\linewidth]{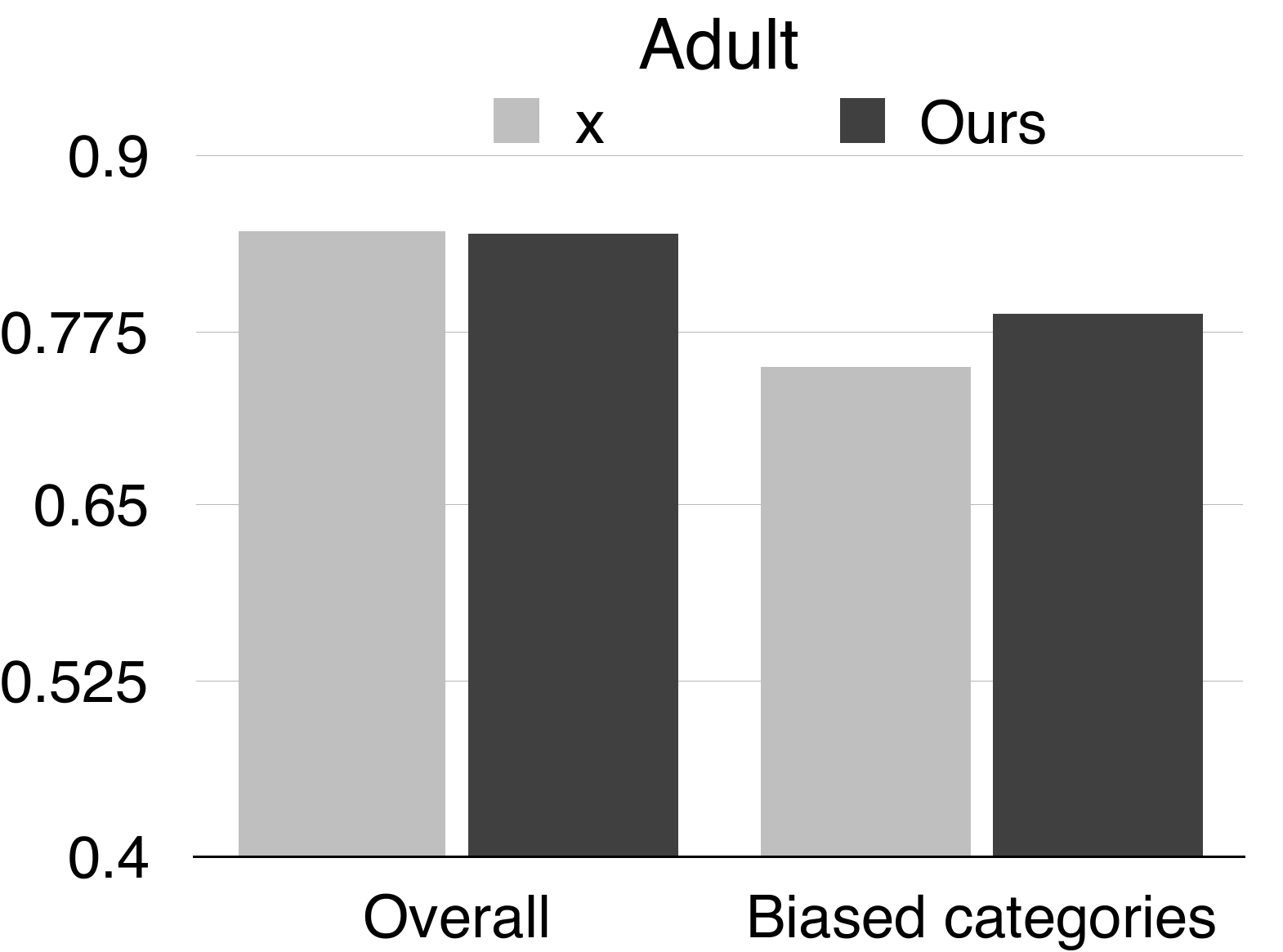}
  \end{subfigure}%
  \begin{subfigure}{.32\textwidth}
      \centering
      \includegraphics[width=0.95\linewidth]{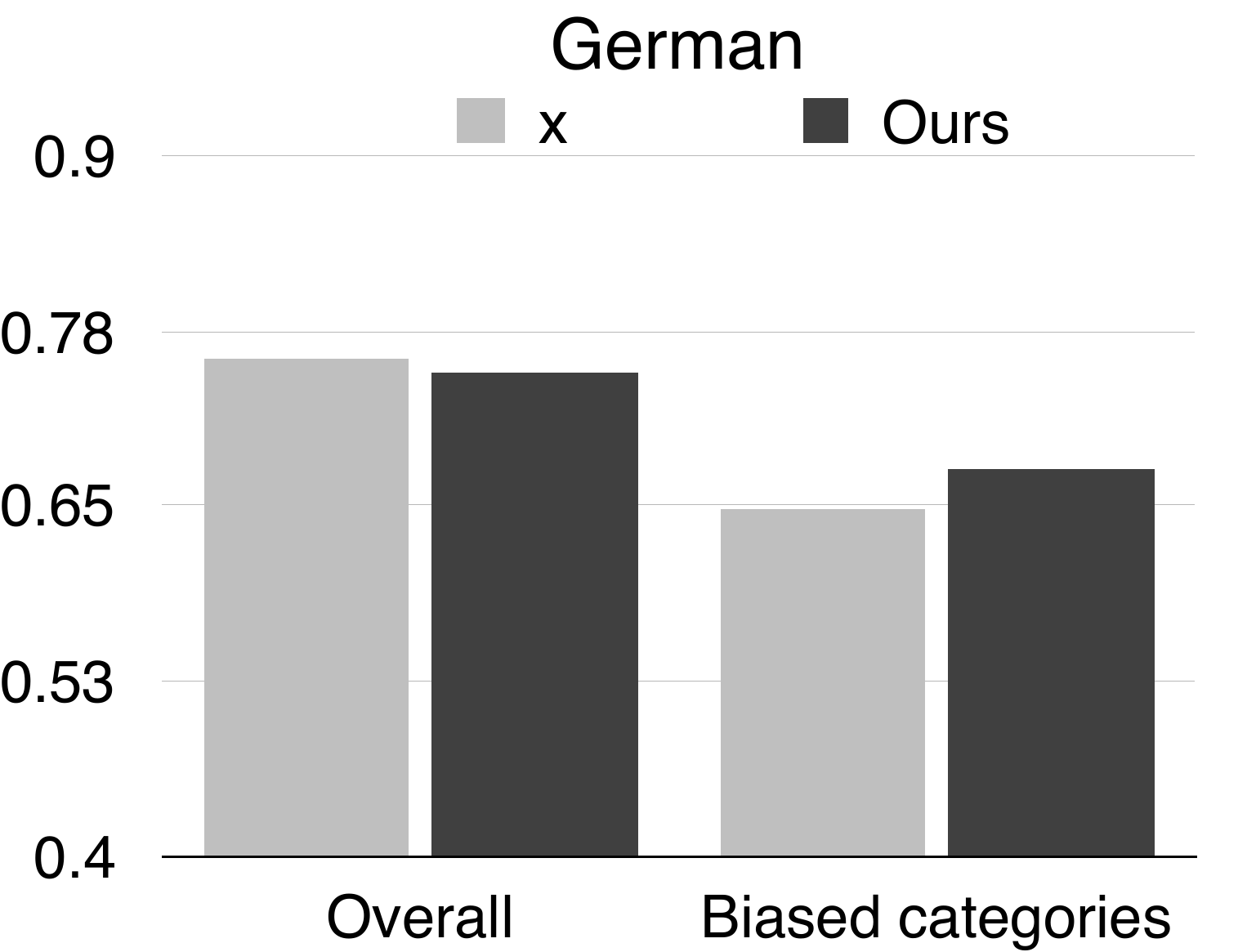}
  \end{subfigure}%
  \begin{subfigure}{.32\textwidth}
      \centering
      \includegraphics[width=0.95\linewidth]{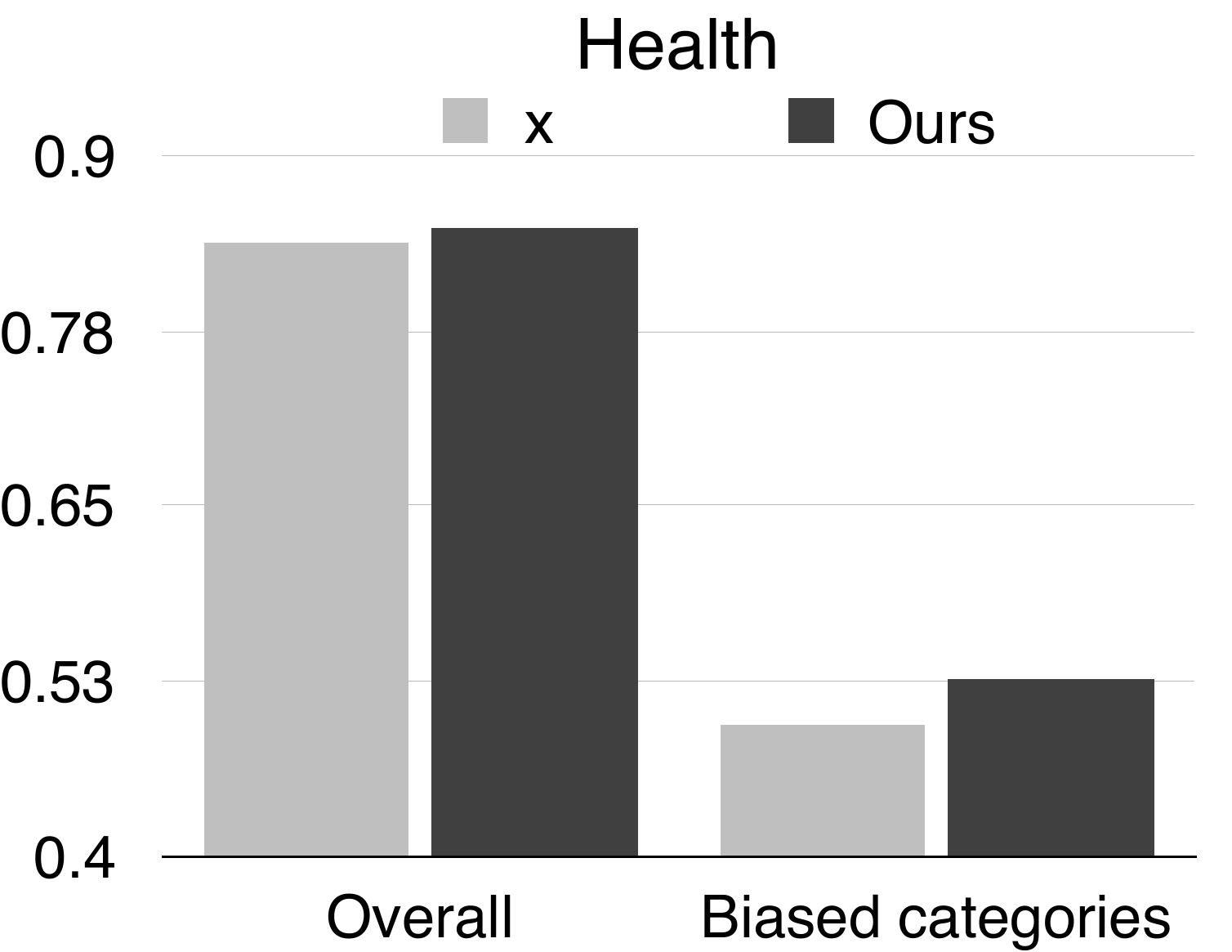}
  \end{subfigure}
  \caption{Overall performance and performance on biased categories. Fair representations lead to high accuracy on baised categories. }
  \end{subfigure}%
  \caption{Fair classification results on different representations. $x$ denotes directly using the observation $x$ as the representation. The black lines in the first and the second row show the performance of predicting the majority label. ``Biased categories'' in the third row are explained in the fourth paragraph of Section \ref{sec:results}. }
  \label{fig:fair_results}
\end{figure}

\subsection{Results}
\label{sec:results}
\paragraph{Fair Classification}
The results on three fairness tasks are shown in Figure \ref{fig:fair_results}. We compare our model with two prior works on learning fair representations: Learning Fair Representations (LFR)~\citep{zemel2013learning} and Variational Fair Autoencoder (VFAE)~\citep{louizos2015variational}. Results of VAE and directly using $x$ as the representation are also shown. 

We first study how much information about $s$ is retained in the learned representation $h$ by using a logistic regression to predict factor $s$. In the top row, we see that $s$ cannot be recognized from the representations learned by three models targeting at fair representations. The accuracy of classifying $s$ is similar to the trivial baseline predicting the majority label shown by the black line. 

The performance on predicting label $y$ is shown in the second row. We see that LFR and VFAE suffer on Adult and German datasets after removing information of $s$. In comparison, our model's performance does not suffer even when making fair predictions. Specifically, on German, our model's accuracy is $0.744$ compared to $0.727$ and $0.723$ achieved by VFAE and LFR. On Adult, our model's accuracy is $0.844$ while VFAE and LFR have accuracies of $0.813$ and $0.823$ respectively. 
On the health dataset, all models' performances are barely better than the majority baseline. The unsatisfactory performances of all models may be due to the extreme imbalance of the dataset, in which $85\%$ of the data has the same label. 

We also investigate how fair representations would alleviate biases of machine learning models. We measure the unbiasedness by evaluating models' performances on identifying minority groups. For instance, suppose the task is to predict savings with the nuisance factor being age, with savings above a threshold of \$$50,000$ being adequate, otherwise being insufficient.
If people of advanced age generally have fewer savings, then a biased model would tend to predict insufficient savings for those with an advanced age. In contrast, an unbiased model can better factor out age information and recognize people that do not fit into these stereotypes. 

Concretely, for groups pooled by each possible value of $y$, we seek for the minority $s$ in each of these groups and define the minority $s$ as the biased category for the group. Then we first calculate the accuracy on each biased category and report the average performance for all categories. We do not compute the instance-level average performance since one category may hold the dominant amount of data among all categories. 

As shown in the third row of Figure \ref{fig:fair_results}, on German and Adult, we achieve higher accuracy on the biased categories, even though our overall accuracy is similar to or lower than the baseline which does not employ fairness constraints. Specifically, on Adult, our performance on the biased categories is $0.788$ while the baseline's accuracy is $0.748$. On German, our accuracy on biased categories is $0.676$ while the baseline achieves $0.648$. The results show that our model is able to learn a more unbiased representation. 

\paragraph{Multi-lingual Machine Translation}

\begin{table}
\centering
\begin{tabular}{|l|c|c|}
\hline
Model &test (fr-en) &test (de-en)  \\ \hline 
Bilingual Enc-Dec~\citep{bahdanau2014neural} & 35.2 &27.3 \\ 
Multi-lingual Enc-Dec~\citep{johnson2016google} &35.5 &27.7  \\\hline  
Our model & \textbf{36.1}& \textbf{28.1} \\  
\quad w.o. discriminator & 35.3 &27.6 \\  
\quad w.o. separate encoders & 35.4 &27.7 \\ \hline 
\end{tabular}
\caption{Results on multi-lingual machine translation.}
\label{tab:multi-MT}
\end{table}

The results of systems on multi-lingual machine translation are shown in Table \ref{tab:multi-MT}. We compare our model with attention based encoder-decoder trained on bilingual data~\citep{bahdanau2014neural} and multi-lingual data~\citep{johnson2016google}. The encoder-decoder trained on multi-lingual data employs a single encoder for both source languages. Firstly, both multi-lingual systems outperform the bilingual encoder-decoder even though multi-lingual systems use similar number of parameters to translate two languages, which shows that learning invariant representation leads to better generalization in this case. The better generalization may be due to transferring statistical strength between data in two languages.

Comparing two multi-lingual systems, our model outperforms the baseline multi-lingual system on both languages, where the improvement on French-to-English is $0.6$ BLEU score. We also verify the design decisions in our framework by ablation studies. Firstly, without the discriminator, the model's performance is worse than the standard multi-lingual system, which rules out the possibility that the gain of our model comes from more parameters of separating encoders. Secondly, when we do not employ separate encoders, the model's performance deteriorates and it is more difficult to learn a cross-lingual representation, which 

\begin{itemize}[leftmargin=*]
\item verifies the theoretical advantage of modeling $p(y \mid x, s)$ instead of $p(y \mid x)$ as mentioned in Section \ref{sec:framework}. Intuitively, German and French have different grammars and vocabulary, so it is hard to obtain a unified semantic representation by performing the same operations.
\item means that the encoder needs to have enough capacity to reach the equilibrium in the minimax game. We also observe that the discriminator needs enough capacity to provide faithful gradients towards the equilibrium. Specifically, instantiating the discriminator with feedforward neural network w./w.o. attention mechanism~\citep{bahdanau2014neural} does not work in our experiments.
\end{itemize}

\paragraph{\imagetask}
\begin{table}
\centering
\begin{tabular} {|c|c|c|}
\hline
Method & Accuracy of classifying $s$ & Accuracy of classifying $y$\\ \hline
Logistic regression & 0.96 & 0.78 \\ \hline
NN + MMD \citep{li2014learning} & - & 0.82 \\ \hline
VFAE \citep{louizos2015variational} & \textbf{0.57} & 0.85 \\ \hline
Ours & \textbf{0.57} & \textbf{0.89} \\ \hline
\end{tabular}
\caption{Results on Extended Yale B dataset. A better representation has lower accuracy of classifying factor $s$ and higher accuracy of classifying label $y$}
\label{tab:yaleb}
\end{table}

\begin{figure}
\centering
  \begin{subfigure}{.45\textwidth}
    \centering
	\includegraphics[width=0.9\linewidth]{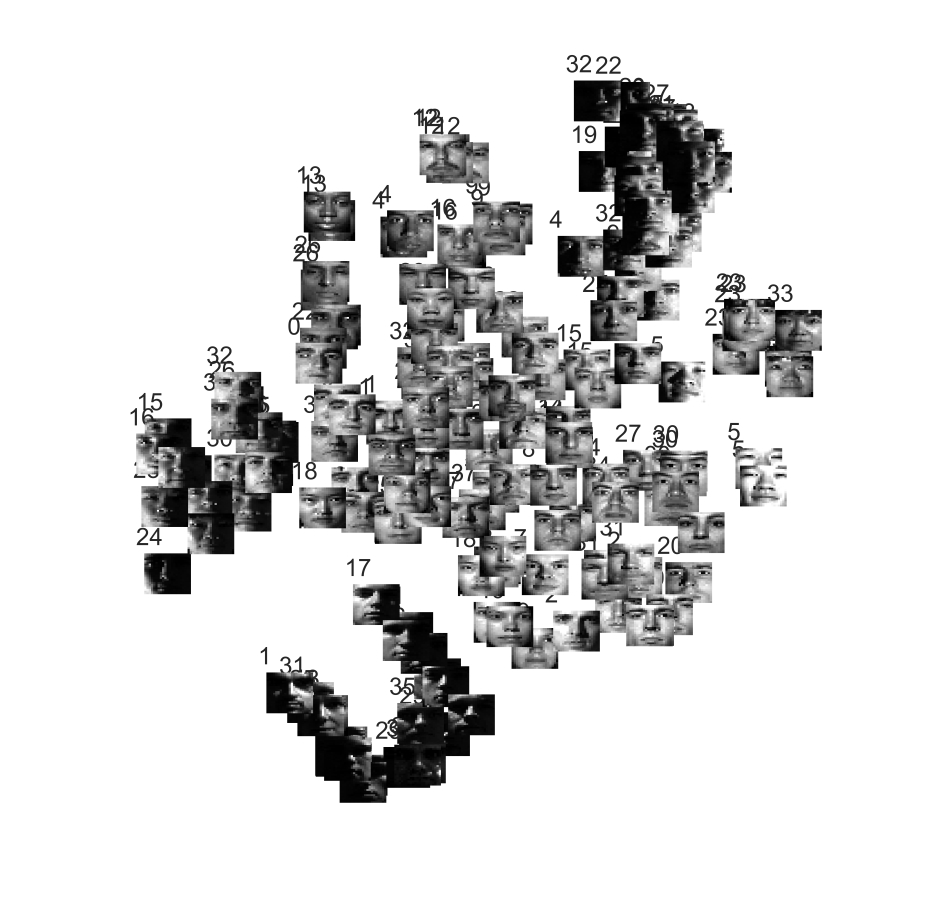}
    \caption{Using the original image $x$ as the representation}
    \label{fig:yaleb_x}
  \end{subfigure} %
  \begin{subfigure}{.45\textwidth}
    \centering
        \includegraphics[width=0.9\linewidth]{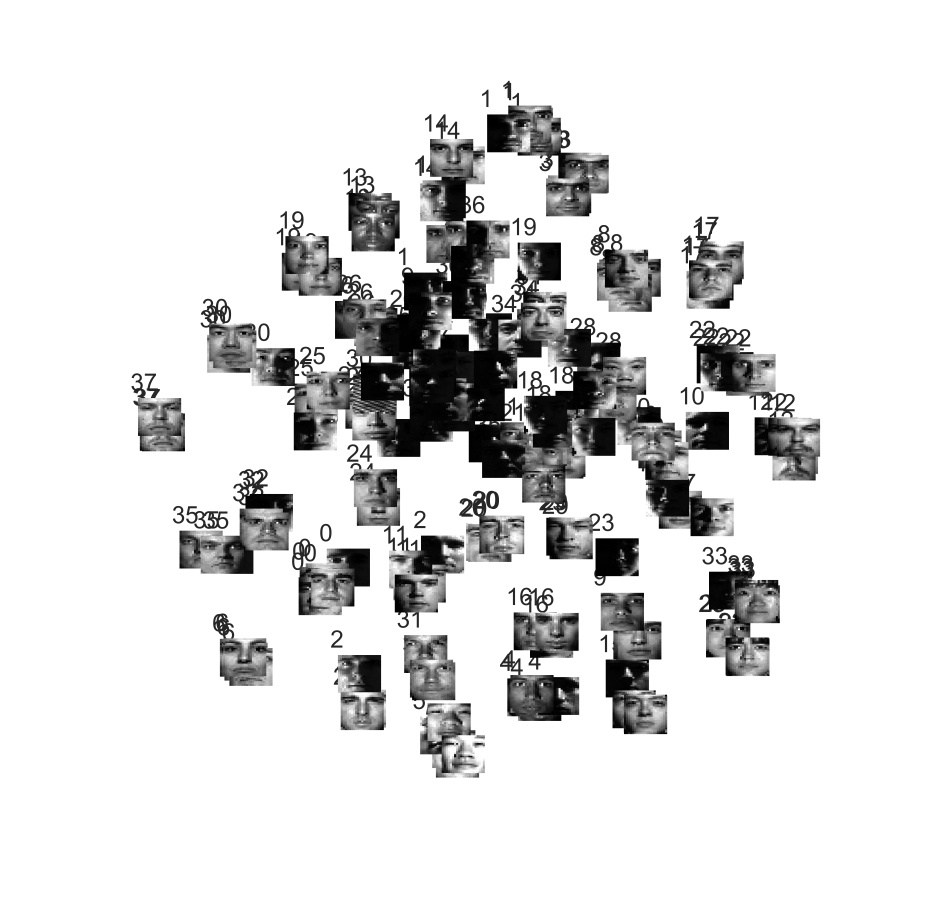}
         \caption{Representation learned by our model}
          \label{fig:yaleb_z}
  \end{subfigure} %
\caption{t-SNE visualizations of images in the Extended Yale B. The original pictures are clustered by the lighting conditions, while the representation learned by our model is clustered by identities of individuals}
\label{fig:yaleb}
\end{figure}

We report the results in Table \ref{tab:yaleb} with two baselines \citep{li2014learning,louizos2015variational} that use MMD regularizations to remove lighting conditions. The advantage of factoring out lighting conditions is shown by the improved accuracy $89\%$ for classifying identities, while the best baseline achieves an accuracy of $85\%$.

In terms of removing $s$, our framework can filter the lighting conditions since the accuracy of classifying $s$ drops from $0.96$ to $0.57$, as shown in Table \ref{tab:yaleb}. We also visualize the learned representation by t-SNE ~\citep{maaten2008visualizing} in comparison to the visualization of original pictures in Figure \ref{fig:yaleb}. We see that, without removing lighting conditions, the images are clustered based on the lighting conditions. After removing information of lighting conditions, images are clustered according to the identity of each person. 
\section{Related Work}
As a specific case of our problem where $s$ takes two values, domain adaption has attracted a large amount of research interest. Domain adaptation aims to learn domain-invariant representations that are transferable to other domains. 
For example, in image classification, adversarial training has been shown to able to learn an invariant representation across domains~\citep{ganin2014unsupervised,ganin2016domain,bousmalis2016domain,tzeng2017adversarial} and enables classifiers trained on the source domain to be applicable to the target domain. Moment discrepancy regularizations can also effectively remove domain specific information~\citep{zellinger2017central, bousmalis2016domain} for the same purpose. By learning language-invariant representations, classifiers trained on the source language can be applied to the target language~\citep{chen2016adversarial, xu2017cross}.

Works targeting the development of fair, bias-free classifiers also aim to learn representations invariant to ``nuisance variables'' that could induce bias and hence makes the predictions fair, as data-driven models trained using historical data easily inherit the bias exhibited in the data.
~\citet{zemel2013learning} proposes to regularize the $\ell_1$ distance between representation distributions for data with different nuisance variables to enforce fairness. The Variational Fair Autoencoder~\citep{louizos2015variational} targets the problem with a Variational Autoencoder~\citep{kingma2013auto,rezende2014stochastic} approach with maximum mean discrepancy regularization. 

Our work is also related to learning disentangled representations, where the aim is to separate different influencing factors of the input data into different parts of the representation.
Ideally, each part of the learned representation can be marginally independent to the other.
An early work by \citet{tenenbaum1997separating} propose a bilinear model to learn a representation with the style and content disentangled.
From information theory perspective, \cite{chen2016infogan} augments standard generative adversarial networks with an inference network, whose objective is to infer part of the latent code that leads to the generated sample.
This way, the information carried by the chosen part of the latent code can be retained in the generative sample, leading to disentangled representation.

As we have discussed in Section \ref{sec:intro}, these methods bear the same drawback that the cost used to regularize the representation is pairwise, which does not scale well as the number of values that the attribute can take could be large. \citet{louppe2016learning} propose an adversarial training framework to learn representations independent to a categorical or continuous variable. A basic assumption in their theoretical analysis is that the attribute is irrelevant to the prediction, which limits its capabilities in analyzing the fairness classifications. 

\section{Conclusion}
In sum, we propose a generic framework to learn representations invariant to a specified factor or trait. We cast the representation learning problem as an adversarial game among an encoder, a discriminator, and a predictor. We theoretically analyze the optimal equilibrium of the minimax game and evaluate the performance of our framework on three tasks from different domains empirically. We show that an invariant representation is learned, resulting in better generalization and improvements on the three tasks.

\section*{Acknowledgement}
We thank Shi Feng, Di Wang and Zhilin Yang for insightful discussions. This research was supported in part by DARPA grant FA8750-12-2-0342 funded under the DEFT program.

\bibliographystyle{plainnat}
\bibliography{nips}

\newpage
\appendix

\section{Supplementary Material: Proofs}
\label{appendix:proof}
The proof for Claim \ref{prop:optimal_D}:
\begin{claim*}
Given a fixed encoder $E$, the optimal discriminator outputs $q_D^*(s \mid h)= \tilde{p}(s \mid h)$. The optimal predictor corresponds to $q_M^*(y \mid h) = \tilde{p}(y \mid h)$.
\end{claim*}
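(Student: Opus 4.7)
The plan is to observe that the objective $J(E,M,D)$ decouples cleanly in $q_D$ and $q_M$: once $E$ is fixed, the first term $\gamma \log q_D(s\mid h)$ depends only on $q_D$ and the second term $-\log q_M(y\mid h)$ depends only on $q_M$. Hence the maximization over $D$ and the minimization over $M$ can be carried out independently, and it suffices to analyze each term on its own. Since the joint $\tilde{p}(h,s,y)$ is completely determined by $E$ and the data distribution, both subproblems are standard pointwise optimizations of a conditional log-likelihood.

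For the discriminator, I would first rewrite the expectation by marginalizing out $y$ and conditioning on $h$:
\begin{equation*}
\E_{h,s\sim\tilde{p}(h,s)}\bigl[\log q_D(s\mid h)\bigr] = \int \tilde{p}(h)\sum_{s} \tilde{p}(s\mid h)\log q_D(s\mid h)\, dh.
\end{equation*}
Because $\tilde{p}(h)\ge 0$, maximizing the outer integral reduces to maximizing the inner sum pointwise in $h$, subject to the simplex constraint $\sum_s q_D(s\mid h)=1$ and $q_D(s\mid h)\ge 0$. I would then either (a) introduce a Lagrange multiplier $\lambda$, set $\partial/\partial q_D(s\mid h)$ to zero to obtain $\tilde{p}(s\mid h)/q_D(s\mid h)=\lambda$, and solve the normalization to get $\lambda=1$; or (b) invoke Gibbs' inequality directly, writing the objective as $-H(\tilde{p}(s\mid h)) - \KL(\tilde{p}(s\mid h)\,\|\,q_D(s\mid h))$ and noting that the KL term is nonnegative with equality iff $q_D(\cdot\mid h)=\tilde{p}(\cdot\mid h)$. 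Functional convexity of $-\log$ guarantees this stationary point is the unique maximum (wherever $\tilde{p}(h)>0$).

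The argument for the predictor is verbatim the same after replacing $s$ by $y$ and flipping the sign, since the minimization of $-\log q_M(y\mid h)$ is the maximization of $\log q_M(y\mid h)$. This yields $q_M^*(y\mid h)=\tilde{p}(y\mid h)$.

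The only mildly delicate point is the measure-theoretic bookkeeping: the optimum is determined only on the support of $\tilde{p}(h)$, and $\tilde{p}(h,s)$ is implicitly defined through a delta function because $E$ is deterministic, so strictly speaking the pointwise arguments should be read almost everywhere with respect to the pushforward distribution of $h$. I expect that to be the main (minor) obstacle in making the proof fully rigorous; the optimization itself is a routine consequence of the concavity of $\log$ and Gibbs' inequality, which is presumably why the authors relegate the details to the supplement.
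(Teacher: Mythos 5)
Your proposal is correct and, in its Lagrange-multiplier variant (a), is essentially identical to the paper's own proof, which introduces dual variables $\lambda(h)$ for the normalization constraints, sets the derivative to zero, and solves $\lambda(h) = -\tilde{q}(h)$ to recover $q_D^*(s \mid h) = \tilde{p}(s \mid h)$, with the predictor handled symmetrically. Your alternative route (b) via Gibbs' inequality is a standard, slightly cleaner packaging of the same pointwise optimization, and your remark about the result holding only on the support of $\tilde{p}(h)$ is a fair (minor) caveat the paper does not spell out.
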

\begin{proof}
We first prove the optimal solution of the discriminator. With a fixed encoder, we have the following optimization problem 

\begin{equation*}
\label{eq:constrained_primal_problem}
\begin{aligned}
	\min_{q_D}&\quad  -J(E, M ,D) \\
	\text{s.t.}&\quad \sum_s q_D(s \mid h)=1, \forall h\\
\end{aligned}
\end{equation*}

Then $L=J(E,M,D) - \sum_{h} \lambda(h) (\sum_{s} q_D(s \mid h)-1)$ is the Lagrangian dual function of the above optimization problem where $\lambda(h)$ are the dual variables introduced for equality constraints.

The optimal $D$ satisfies the following equation
\begin{equation}
\label{eqn:optimal_C}
\begin{aligned}
&\quad 0 = \frac{\partial L}{\partial q_D^*(s \mid h)}\\
\iff&\quad 0 = -\frac{\partial J}{\partial q_D^*(s \mid h)} - \lambda(h) \\
\iff&\quad \lambda(h) = -\frac{\sum_y \tilde{q}(h, s, y)}{q_D^*(s \mid h)} \\
\iff&\quad \lambda(h) q_D^*(s \mid h) = -\tilde{q}(s, h)
\end{aligned}
\end{equation}
Summing w.r.t. $s$ on both sides of the last line of Eqn. \eqref{eqn:optimal_C} and using the fact that $\sum_s q_D^*(s \mid h) = 1$, we get
\begin{equation}
\label{eqn:lambda_C}
\begin{aligned}
\lambda(h)
&= -\tilde{q}(h)
\end{aligned}
\end{equation}
Substituting Eqn. \ref{eqn:lambda_C} back into Eqn. \ref{eqn:optimal_C}, we can prove the optimal discriminator is
\begin{equation*}
q_D^*(s \mid h)=\tilde{q}(s \mid h)
\end{equation*}
Similarly, taking derivation w.r.t. $q_M(y \mid h)$ and setting it to 0, we can prove $q_M^*(y \mid h) = \tilde{q}(y \mid h)$.
\end{proof}

\end{document}